\newtheorem{theorem}{Theorem} 
\newtheorem{corollary}{Corollary}
\newcommand{\ml}[1]{\textcolor{red}{[ML: #1]}}
\newcommand{\at}[1]{\textcolor{purple}{[AT: #1]}}
\newcommand{\reals}{\mathbb{R}}
\newcommand{\W}{\mathcal{W}}
\newcommand{\X}{\mathcal{X}}
\newcommand{\B}{\mathcal{B}}
\newcommand{\N}{\mathcal{N}}
\newcommand{\Y}{\mathcal{Y}}
\newcommand{\proj}[1]{\textsc{proj}_{#1}}
\newcommand{\safe}{\text{safe}}
\newcommand{\coll}{\text{coll}}
\newcommand{\expBelief}{\textbf{b}}
\title{\LARGE \bf
Chance-Constrained Multi-Robot Motion Planning under \\ Gaussian Uncertainties
}
\author{Anne Theurkauf$^*$, Justin Kottinger$^*$, Nisar Ahmed, and Morteza Lahijanian
\thanks{This work was supported by NASA STTR award 80NSSC20C0314.}
\thanks{$^*$A. Theurkauf and J. Kottinger have equal contributions in this work.}
\thanks{Authors are with the department of Aerospace Engineering Sciences at the University of Colorado Boulder, CO, USA
        {\tt\small \{\textit{firstname}.\textit{lastname}\}@colorado.edu}}
}
\begin{document}

\AddToShipoutPictureBG*{%
  \AtPageUpperLeft{%
    \hspace{14.5cm}%
    \raisebox{-1.5cm}{%
      \makebox[0pt][r]{Submitted to IEEE Robotics and Automation Letters}}}}

\maketitle
\thispagestyle{plain}
\pagestyle{plain}

\begin{abstract}
We consider a chance-constrained multi-robot motion planning problem in the presence of Gaussian motion and sensor noise. Our proposed algorithm, CC-K-CBS, leverages the scalability of kinodynamic conflict-based search (K-CBS) in conjunction with the efficiency of Gaussian belief trees as used in the Belief-$\mathcal{A}$ framework, and inherits the completeness guarantees of Belief-$\mathcal{A}$'s low-level sampling-based planner. We also develop three different methods for robot-robot probabilistic collision checking, which trade off computation with accuracy. Our algorithm generates motion plans driving each robot from its initial to goal state while accounting for uncertainty evolution with chance-constrained safety guarantees. Benchmarks compare computation time to conservatism of the collision checkers, in addition to characterizing the performance of the planner as a whole.  Results show that  CC-K-CBS scales up to 30 robots.
\end{abstract}

\section{INTRODUCTION}
    \label{sec:intro}
    
Robotic teams with diverse capabilities provide a powerful tool for efficiently accomplishing complex tasks. 
Applications extend from planetary exploration to automated warehouses and search and rescue missions.  However, planning for many robots is a challenging problem.
A major difficulty of \textit{multi-robot motion planning} (MRMP) lies in the size of the planning space, which grows exponentially with the number of robots. Planning must also respect the robots' kinodynamic constraints and avoid robot-robot and robot-obstacle collision, 
a difficult problem even for a single robot.
Additional complexities arise for realistic robots due to uncertainty in both motion and sensing, which the planner needs to account for in collision checking. 
In this work, we focus on the uncertain MRMP problem, and develop a scalable algorithm that guarantees collision avoidance and reaching goal with completeness properties.

Extensive work has considered MRMP problem in deterministic settings.
Proposed methods 
are either centralized (coupled)~\cite{Lavalle98rapidly-exploringrandom,wagner2015subdimensional,shome2020drrt,Kottinger:ICRA:2021,sucan2011sampling} or decentralized (decoupled)~\cite{gammell2014bit,tang2018complete,Le:ICAPS:2017,Le:RAL:2019}. 
Centralized planners compose all robots' states together into a single meta-robot state, to which single-robot planning methods with completeness guarantees can then be applied. 
However, the exponential increase in the state-space dimension makes centralized approaches scale poorly. Decentralized methods address this by planning for each robot individually and considering system collisions separately. These algorithms drastically cut computation time but often sacrifice completeness guarantees. For example, \cite{vandenBerg2005_prioritized} 
plans for one robot at a time, where each robot is constrained by the paths of all previously-planned robots. While fast, this method is provably incomplete~\cite{lavalle2006planning}. 
One recent contribution to scalable MRMP is Kinodynamic Conflict-Based Search (K-CBS), \cite{Kotting2022_KCBS}. K-CBS pairs a low-level kinodynamic motion planner for individual robots with a high-level tree search over the space of possible plans. 
K-CBS offers scalability and includes a merging technique to uphold probabilistic completeness properties of the low-level motion planner. Although a great step forward, none of these approaches are robust to state and measurement uncertainty, which are inherent to realistic robotic systems.

\begin{figure}
    \centering
    \includegraphics[width=0.26\textwidth]{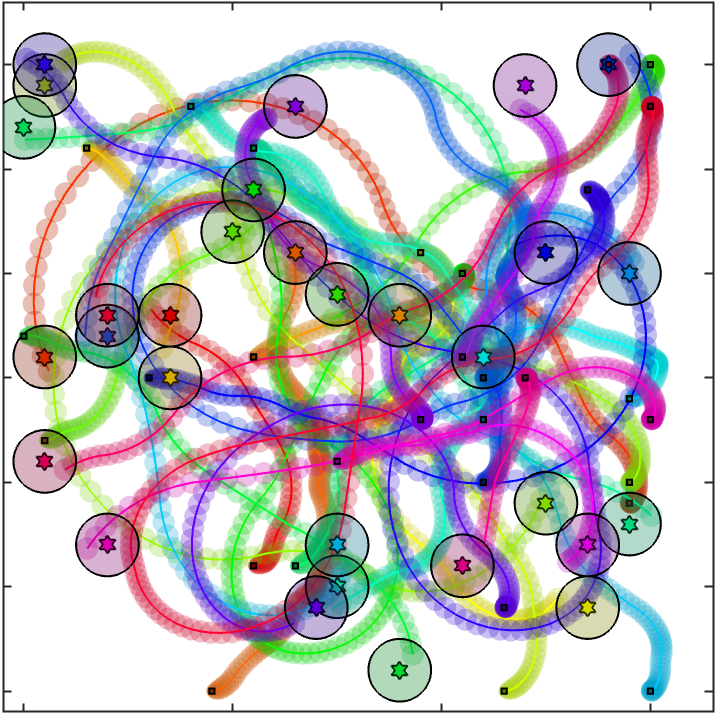}
    \caption{Sample plans for 30 robots with unicycle dynamics under motion and sensing uncertainties. Ellipses are $95\%$ safety contours. Star-centered circles are goal regions.}
    \label{fig:unicycle_SampleTraj_big}
    \vspace{-1mm}
\end{figure}

Motion planning under Gaussian uncertainty has largely been studied for single-robot systems. The most efficient algorithms are sampling-based planners, which are analogous to their deterministic counterparts, e.g., the feedback-based information roadmap \cite{Agha-Mohammadi2014_FIRM} 
and rapidly-exploring random belief trees \cite{Bry2011_BeliefProp}. 
The belief-$\mathcal{A}$ framework introduced in \cite{Ho2022_GBT} provides a generic structure for adapting any sampling-based kinodynamic planner $\mathcal{A}$ into a belief space planner. It shows that fast planning is possible by using chance constraint formulation of collision avoidance.
Although these approaches can solve uncertain motion planning, they cannot handle uncertain MRMP out of the box. 

One approach to uncertain MRMP is to abstract the problem to a multi-agent Markov Decision Process (MMDP), which can be reduced to single-agent MDPs with a joint action space \cite{Boutilier1996_MMDP}. This relies on abstraction of the robot dynamics to a discretized transition probability matrix, which is difficult and scales poorly to many robots. Additionally, MMDPs solely account for uncertain dynamics, whereas measurement noise requires using a Partially Observable MDP (POMDP), which can be very difficult to solve and also scales poorly.  Other uncertain MRMP approaches use \emph{online} distributed planning, e.g., \cite{Patwardhan2023_DistMRMP,AlonsoMora2018_CoopCollAvoid,Senbaslar2019_MultiRobotDistributed,VanParys2016_OnlineDistMultiVehicle}, where each robot plans over a short horizon to account for nearby robots and obstacles. 
These methods perform well in unknown environments. However, neither POMDP approaches nor online distributed methods provide any formal performance or completeness guarantees. 

 

We address this gap by presenting a scalable MRMP algorithm for robots operating with Gaussian state and measurement noise.
We combine 
the scalability of K-CBS with the fast belief-space planning of belief-$\mathcal{A}$ 
to solve MRMP for robots under Gaussian uncertainty. Our algorithm, Chance-Constrained K-CBS (CC-K-CBS), is probabilistically complete via the completeness inheritance properties of K-CBS and belief-$\mathcal{A}$~\cite{Kotting2022_KCBS,Ho2022_GBT}. 

We present three main contributions: (i) three probabilistic robot-robot collision checking algorithms of varying conservatism and computational complexity; (ii) a scalable decentralized planner that uses novelties from (i) to solve MRMP for robots operating with Gaussian noise; 
and (iii) case studies and benchmarks demonstrating the applicability of the larger MRMP framework, validity of our collision-checking methods and comparisons between the algorithms. 

\section{PROBLEM FORMULATION}
    \label{sec:problem}

\subsection{Uncertain Multi-Robot System}
We consider $N_A$ uncertain robots in a shared workspace $\W \subset \reals^w$, $w\in\{2,3\}$ with a set of static obstacles $\W_O \subset \W$. Each robot $i \in \{1,\ldots, N_A\}$ has a rigid body $\B^i \subset \W$ and uncertain dynamics
\begin{equation}
\label{eq: dynamics model}
    x_{k+1}^i=A^ix_k^i + B^i u_k^i + w^i_k, \quad w_k^i \sim \N(0,Q^i), 
\end{equation}
where $x_k^i\in\X^i \subseteq\reals^{n_i}$ and $u_k^i\in\mathcal{U}^i \subseteq \reals^{p_i}$ are the state and controls, respectively, with the associated $A^i \in\reals^{n_i\times n_i}$ and $B^i \in\reals^{n_i\times p_i}$ matrices, and $w^i_k$ is the motion (process) noise sampled from a Gaussian distribution with zero mean and covariance $Q^i \in \reals^{n_i \times n_i}$. 

Note that state $x_k^i \in \X^i$ corresponds to the origin of the body frame of robot $\B^i$ w.r.t. to the global frame. The robot body $\B^i$ is defined as a set of points in its local frame. 
Since state $x_k^i$ includes configuration (position and orientation) as well as velocity terms, the body of the robot is well-defined w.r.t. global frame by projection of the state to either the configuration space (C-space) or workspace ($\W$).
 We define operator $\proj{\mathcal{S}}: \cup_{i=1}^{N_A} \X^i \to \mathcal{S}$ to be this projection of $x_k^i \in \X^i$ to its the lower dimensional space $\mathcal{S}\subset \X^i$, e.g., $\proj{\W}(x_k^i)\subset\W$ is the set of points ($\B^i$) that robot $i$ at state $x_k^i$ occupies in the workspace.

Each robot $i \in \{1, \dots, N_A\}$ also has a sensor (measurement) model described by
\begin{equation}
\label{eq: measurement model}
    y_k^i=C^ix_k^i + v^i_k, \quad v_k^i \sim\N(0,R^i), 
\end{equation}
where $y_k^i\in\Y^i \subseteq\reals^{m_i}$ is the sensor measurement, $C^i \in\reals^{m_i \times n_i}$ is the corresponding matrix, and $v_k^i$ is the measurement noise, sampled from Gaussian distribution with zero mean and covariance $R^i$. We additionally assume the system is both controllable and observable.

\subsection{Estimation, Control, and Motion Plan}
\label{sec:Est Cont Motion Plan}
Each robot is equipped with a standard Kalman Filer (KF) and feedback controller.  The uncertainty of robot $i$ state, also known as the \textit{belief} of $x^i_k$ and denoted by $b(x^i_k)$,  is given by the KF as a Gaussian distribution at each time step $k$, i.e.,  
$x^i_k \sim b(x_k^i)=\N(\hat{x}_k^i,\Sigma_k^i)$, where $\hat{x}_k^i \in \X^i$ is the state estimate (mean), and $\Sigma_k^i \in \reals^{n_i \times n_i}$ is the covariance matrix.  

We define \emph{motion plan} for robot $i$ to be a pair $(\check{U}^i,\check{X}^i)$ of nominal controls $\check{U}^i=(\check{u}_0^i, \check{u}_1^i, \ldots, \check{u}_{T-1}^i)$ and the corresponding trajectory of nominal states $\check{X}^i=(\check{x}_0^i, \check{x}_1^i, \ldots, \check{x}_T^i)$, where $\check{X}^i$ is obtained via propagation of the dynamics in~\eqref{eq: dynamics model} under $\check{U}^i$ in the absence of process noise.  Then, robot $i$ executes motion plan $(\check{U}^i, \check{X^i})$ by using a proportional stabilizing feedback controller 
$u_k^i = \check{u}_{k}^i - K(\hat{x}_k^i - \check{x}_k^i),$ 
where $K$ is the control gain.


\subsection{Probabilistic Collisions and Task Completion}



Each robot $i$ is assigned a goal region $\X_{G}^i\subset \X^i$  in its state space. 
The workspace contains $ |\W_O| = N_O$ static obstacles and $N_A$ robots (moving obstacles).  The motion planning task is to compute a plan for every robot to reach its goal and avoid collisions with all (static and moving) obstacles.  In a deterministic setting, these objectives (reaching goal and obstacle avoidance) are binary.  In a stochastic setting, they are probabilistic, as defined below.  

The probability that robot $i$ is in the goal region at time $k$ is given by 
\begin{equation}
    \label{eq:prob goal}
    P^{i_k}_G = P(x^i_k \in \X^i_G) = \int_{\X^i_G} b(x_k^i)(s) ds,
\end{equation}
where $b(x^i_k)(s)$ is the distribution $b(x^i_k)$ evaluated at state $s \in \X^i$.
For collision probability, we consider static and moving obstacles separately.  Let $\X^i_O \subseteq \X^i$ be the set of static obstacles in the robot $i$'s state space.  It is constructed from $\W_O$ (and the limits on the velocity terms).  Then, similar to probability of goal, the probability of collision of robot $i$ at time step $k$ with the static obstacles is
\begin{equation}
    \label{eq:prob collision}
    P^{i_k}_O = P(x^i_k \in \X^i_O) = \int_{\X^i_O} b(x_k^i)(s) ds.
\end{equation}

The probability of one robot colliding with another is more difficult to define because we must account for uncertainty in both robot bodies. To describe the collision states, it is best to consider the composed space of the two robots.  That is, for robot $i$ and $j$, $i \neq j$, define the composed state as $x^{ij}_k = (x^i_k, x^j_k) \in \X^i \times \X^j = \X^{ij}$.  The set of points that correspond to the two robots colliding is described by
\begin{align*}
    \X^{ij}_\coll = \{(x^i_k, x^j_k) \in \X^{ij} \mid \proj{\W}(x^i_k) \cap \proj{\W}(x^j_k) \neq \emptyset\}. 
\end{align*}
The probability of collision is then given by
\begin{align}
    P^{ij_k}_{\coll} &= P((x^i_k,x^j_k) \in \X^{ij}_\coll) = \int_{\X^{ij}_\coll} b(x_k^{ij})(s) ds \nonumber 
    \\
    &= \int_{(s^i,s^j) \in \X^{ij}_\coll} \hspace{0 mm} b(x_k^{i})(s^i) b(x_k^{j})(s^j) ds^i ds^j,
    \label{eq:prob robot-robot collision}
\end{align}
where the equality in the second line is due to independence of the robots' uncertainties from each other.

In this work, we use a chance constraint to bound the safety (violating) probabilities.  That is, given a minimum probability of safety $p_\safe$, we want to ensure that probability of collision (with static and moving obstacles) at every time step is less than $1-p_\safe$, and the probability of ending in the goal is at least $p_\safe$.

\subsection{Chance-Constrained MRMP Problem}
Given $N_A$ robots with uncertain dynamics in \eqref{eq: dynamics model}, noisy measurements in \eqref{eq: measurement model}, and controller from Sec.~\ref{sec:Est Cont Motion Plan} in workspace $\W$ with static obstacles $\W_O$, 
a set of initial distributions $\{x_0^i = \N(\hat{x}_0^i,\Sigma^i_0)\}_{i=1}^{N_A}$, 
a set of goal regions $\{\X_G^i\}_{i=1}^{N_A}$, and a safety probability threshold $p_\safe$, for each robot $i \in \{1,\ldots,N_A\}$, compute a motion plan $(\check{U}^i, \check{X}^i) = ((\check{u}_0^i, \check{u}_1^i, \ldots, \check{u}_{T-1}^i), (\check{x}_0^i, \check{x}_1^i, \ldots, \check{x}_T^i))$ 
such that 
$\check{x}_0^i = \hat{x}_0^i$,
\begin{subequations}
    \begin{align}
        \label{eq:coll cc}
        &P^{i_k}_O + \sum_{j=1, j \neq i}^{N_A} P^{ij_k}_{\coll} \leq 1-p_\safe && \forall k\in \{1,..,T\} \\
        \label{eq: goal cc}
        &P^{i_T}_G \geq p_{\safe} &&
    \end{align}    
\end{subequations}
where probability terms $P^{i_k}_O$, $P^{ij_k}_{\coll}$, and $P^{i_T}_G$ are given in \eqref{eq:prob collision}, \eqref{eq:prob robot-robot collision}, and \eqref{eq:prob goal}, respectively.


Note that, unlike most related work, e.g., \cite{Patwardhan2023_DistMRMP,AlonsoMora2018_CoopCollAvoid,Senbaslar2019_MultiRobotDistributed,VanParys2016_OnlineDistMultiVehicle}, that solve the uncertain MRMP problem using \emph{online} (iterative) approaches with finite horizon predictions, the above problem asks for the entire plan with safety guarantees.  This implies that the plans must be computed \emph{offline} while the chance-constraint guarantees must hold at run time. The advantage to this approach is that the computed plans are ensured to be sound, i.e., hold hard guarantees on safety and reachability, before deployment, whereas the online methods may produce incorrect solutions, e.g., run into a deadlock.  
In this work, we seek an offline MRMP algorithm that guarantees soundness \emph{and} exhibits probabilistic completeness, i.e., generates a valid plan almost-surely 
if one exists.


\section{Chance-Constrained MRMP Algorithms}
    \label{sec:method}

We now describe two algorithms for Chance-Constrained MRMP (CC-MRMP). We begin by formalizing the composed belief space for $N_A$ robots with models in \eqref{eq: dynamics model} and~\eqref{eq: measurement model}. We then propose a centralized approach to solving CC-MRMP by employing an existing single-robot belief-space planning algorithm called \emph{Belief-}$\mathcal{A}$ \cite{Ho2022_GBT}.
To accommodate larger multi-robot systems, we introduce a decentralized algorithm called \emph{CC-K-CBS} which adapts the novel K-CBS algorithm to plan for systems with state and measurement uncertainty. We describe the empirical advantages and disadvantages of these methods in Sec.~\ref{sec:eval}.

\subsection{Centralized Chance-Constrained MRMP}
Centralized methods for uncertain MRMP require composing the system beliefs into a single \emph{meta-belief}, $\textbf{B}_k$, with a single dynamical constraint. We can then leverage Belief-$\mathcal{A}$ on this meta-belief. Belief-$\mathcal{A}$ provides a framework for adapting any kinodynamic sampling-based planner $\mathcal{A}$ to efficiently plan in the Gaussian belief space under chance constraints as described by \eqref{eq:coll cc} and \eqref{eq: goal cc}. See \cite{Ho2022_GBT} for details. 

For centralized CC-MRMP, because all the robots are assumed to be independent, sampling and propagation of the meta-belief can be performed independently for each robot, and the meta-agent belief reconstructed from the individual robots' \emph{expected beliefs}, $\expBelief(x_k^i)$. These individual beliefs are obtained through \cite{Bry2011_BeliefProp}, which provides a method for predicting the belief over states for a linearizable and controllable system that uses a KF for estimation. This method forecasts the belief with a priori unknown measurements, yielding an expected belief defined as:
\begin{align*}
    \label{eq: expected belief}
    \expBelief(x_k^i) &= \mathbb{E}_Y[b(x_k^i | x_0^i, y_{0:k}^i)] 
    = \int_{y_{0:k}^i} \hspace{-3mm} b(x_k^i | x_0^i, y_{0:k}^i) pr(y_{0:k}^i)dy.
\end{align*}
We use this expected belief for offline motion planning and evaluation of chance constraints in \eqref{eq:coll cc} and \eqref{eq: goal cc}. For a given initial belief $b(x_0^i)$ and nominal trajectory, $\check{X}^i$, the expected belief is
$\expBelief(x_k^i) = \mathcal{N}(\check{x}_k^i,\Gamma_k^i)$, where $\Gamma_k^i = \Sigma_k^i+\Lambda_k^i$ can be recursively calculated as derived in \cite{Bry2011_BeliefProp}:
\begin{align}
    \Sigma_k^{i-} &= A^i \Sigma^i_{k-1}A^{iT} + Q^i, \quad \Sigma_k^i = \Sigma_k^{i-} - L_k^i C^i\Sigma_k^{i-}, \\
    \Lambda_k^i &= (A^i-B^iK^i)\Lambda_{k-1}^i(A^i-B^iK^i)^T + L_k^i C^i \Sigma_k^{i-},
\end{align}
where $\Sigma_k^i$ is the online KF uncertainty, and $\Lambda_k^i$ is covariance of the expected state estimates $\hat{x}_k^i$. Intuitively, this distribution can be thought of as the sum of the online estimation error and the uncertainty from the a priori unknown measurements made during execution. Given the expected beliefs for the agents, the meta-belief can be constructed such that $\textbf{B}_k=\mathcal{N}(\check{X}_k^{M}, \Gamma_k^M)$, where $\check{X}_k^{M}=(\check{x}_k^{iM}, ...,\check{x}_k^{N_A M})$, and $\Gamma_k^M$ is constructed as a block diagonal with elements $\Gamma_k^i, i=1,...,N_A$. 
Similarly, the validity checkers can be iterated for each robot, such that an entire meta-belief node is considered invalid if a single robot's belief is deemed invalid.

This centralized algorithm is the simplest of our two approaches, however it suffers from poor scalability with the number of agents, just as centralized deterministic planners do. To this end, we introduce a decentralized planner. 

\subsection{Decentralized Chance-Constrained K-CBS}
\begin{algorithm}[t]
\KwIn{$\W$, $\{x_0^i\}_{i=1}^{N_A}$, N, B, $p_{\safe}$}
\KwOut{Valid motion plans $\{(\check{U}^i, \check{X}^i)\}_{i=1}^{N_A}$}
$Q, n_0, \leftarrow \emptyset$;
$n_0$.plan $\leftarrow$ \underline{\textsc{getInitPlan}}$()$;$Q$.add($n_0$)\label{ln:rootNode}\;
\While{solution not found \label{ln:mainBegin}}
{
    $c \leftarrow$ $Q$.top()\label{ln:selectKCBS}\;
    \eIf{$c$.plan is empty \label{ln:retry}}
    {
        $Q$.pop(); \underline{\textsc{replan}}(N, $c$);
        $Q$.add($c$)\label{ln:retryAdd}\;
    }
    {
        K $\leftarrow$ \underline{\textsc{validatePlan}}($c$.plan, $p_{\safe}$)\label{ln:validate}\;
        \uIf{K is empty\label{ln:isKempty}}
        {
            \KwRet{$c$.plan\label{ln:retPlan}}
        }
        \uElseIf{shouldMerge(K, B)\label{ln:shouldMerge}}
        {
            \KwRet{mrg\&rstrt($\W$, $\{x_0^i\}_{i=1}^{N_A}$, N, B, K)}
        }
        \Else
        {
            $Q$.pop()\label{ln:branchPop}\;
            \For{every robot $i$ in K\label{ln:branchForLoop}}
            {
                $c_{new}.\mathcal{C} \leftarrow$ $c.\mathcal{C} \: \cup$ \underline{\textsc{createCSTR}}(K, $i$)\label{ln:create constraint}\;
                \underline{\textsc{replan}}(N, $c_{new}$); $Q$.add($c_{new}$)\label{ln:branchAdd}\; 
            }
        }
    }
    \label{ln:mainEnd}
}
\caption{Chance Constrained K-CBS}
\label{alg:G-KCBS}
\end{algorithm}
\setlength{\textfloatsep}{10pt}

Our decentralized CC-MRMP algorithm is based on Kinodynamic Conflict-Based Search (K-CBS)~\cite{Kotting2022_KCBS}. K-CBS consists of a high-level conflict-tree search and a low-level motion planner. At the high-level, K-CBS tracks a constraint-tree (CT), in which each node represents a suggested plan, which might contain conflicts (i.e., collisions). At each iteration, K-CBS picks an unexplored node from the CT based on some heuristic and identifies the conflicts in the node’s corresponding plan. Then, K-CBS tries to resolve the conflicts by creating child nodes as follows: if robots $i$ and $j$ are in collision for some time duration $[t_s,t_e]$ (denoted $\langle i,j,[t_s,t_e]\rangle$), then two child nodes are created, one with the constraint that $\mathcal{B}^i$ cannot intersect with $\mathcal{B}^j$ for all $t\in[t_s,t_e]$ and the other dually for robot $j$.
Then, in each child node, a low-level sampling-based planner attempts to replan a trajectory that satisfies the set of all constraints in that branch of the CT within $N>0$ iterations. This process repeats until a non-colliding plan is found. In the case where greater than $B>0$ conflicts arise, K-CBS includes a merge and restart procedure that recursively combines a subset of the robots. We refer the reader to~\cite{Kotting2022_KCBS} for details on the original K-CBS.
We now describe CC-K-CBS (Alg.~\ref{alg:G-KCBS}), which requires several adaptations of K-CBS (underlined in Alg.~\ref{alg:G-KCBS}).

Firstly, note that in K-CBS, collision checking between robots and static obstacles are performed separately. The low-level planner keeps track of workspace static obstacles, and the CT node accounts for robot-robot collisions at the high-level.  This separation is key for scalability, but it introduces a major challenge in chance-constrained planning for uncertain robots.  We expand on this further in Sec.~\ref{sec:collisionChecking}.
Secondly, since states are now beliefs, CC-K-CBS requires a Gaussian belief low-level motion planner.  We propose to use Belief-$\mathcal{A}$ for the \textsc{replan} and \textsc{getInitPlan} procedures. Thirdly, CC-K-CBS requires a definition of constraints; we introduce \emph{belief-constraints}. This simply replaces the deterministic shape with a non-deterministic belief. Thus, given a conflict $K=\langle i,j,[t_s,t_e]\rangle$, one constraint forces robot $i$ to satisfy the chance-constraint with $b(x_k^j)$ for all $t\in[t_s,t_e]$ and dually for robot $j$. This enables \textsc{createCSTR} 
(Line~\ref{ln:create constraint}) 
to define meaningful constraints during replanning with Belief-$\mathcal{A}$. 

Lastly, \textsc{validatePlan} (Line~\ref{ln:validate}) requires an efficient way of validating the chance constraints (Equations \ref{eq:coll cc} and \ref{eq: goal cc}) while simulating prospective MRMP plans. We propose several techniques for this computation, each of which is described in detail in Sec.~\ref{sec:collisionChecking}. Every method requires comparison with the maximum allowable probability of collision $p_{coll} = 1-p_{\safe}$. 
We say \textsc{validatePlan} procedure is \emph{complete} if it can correctly evaluate the constraints \eqref{eq:coll cc} and \eqref{eq: goal cc} for all Gaussian beliefs.

The following theorem provides the sufficient conditions for CC-K-CBS to be probabilistically complete. 

\begin{theorem}
    \label{thm:completeness}
    CC-K-CBS is probabilistically complete if the underlying planner Belief-$\mathcal{A}$ is  probabilistically complete and the \textsc{validatePlan} procedure is complete.
\end{theorem}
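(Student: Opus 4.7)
The plan is to mimic the completeness argument of the original K-CBS paper, showing that each modification made in CC-K-CBS (belief-space low-level planner, belief-constraints, probabilistic validatePlan) preserves the ingredients of that argument. The skeleton is: (i) as long as the high-level CT search terminates on a plan whose validatePlan returns empty, soundness gives us a valid chance-constrained solution; (ii) if no conflict-free plan exists at the decoupled level, the merge-and-restart procedure eventually produces a centralized instance; and (iii) on that centralized instance, Belief-$\mathcal{A}$ acting on the meta-belief is probabilistically complete by assumption.

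First I would formalize that the \textsc{validatePlan} procedure being complete means it returns an empty conflict set exactly when the candidate plan satisfies the chance constraints \eqref{eq:coll cc} and \eqref{eq: goal cc}. This immediately gives soundness: whenever Alg.~\ref{alg:G-KCBS} returns at line~\ref{ln:retPlan}, the output is a valid CC-MRMP plan. Next, I would argue that every call to \textsc{replan} and \textsc{getInitPlan} is well-defined in the belief setting, because Belief-$\mathcal{A}$ is a kinodynamic sampling-based planner over the Gaussian belief space and the belief-constraints constructed by \textsc{createCSTR} correspond to forbidden belief regions (chance-constrained intersections with a fixed Gaussian $b(x_k^j)$) that Belief-$\mathcal{A}$ can treat exactly as it already treats static belief obstacles.

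Then I would handle the decoupled-search branch by borrowing the K-CBS argument almost verbatim. For any fixed set of belief-constraints $\mathcal{C}$ on an individual robot, probabilistic completeness of Belief-$\mathcal{A}$ guarantees that a feasible trajectory satisfying $\mathcal{C}$, if it exists, is found almost surely as the iteration count $N\to\infty$; the retry mechanism at lines~\ref{ln:retry}--\ref{ln:retryAdd} ensures a CT node whose \textsc{replan} initially failed is revisited infinitely often, so the probability of missing a feasible single-robot plan drops to zero. The branching rule at lines~\ref{ln:branchForLoop}--\ref{ln:branchAdd} preserves the invariant (from K-CBS) that any valid joint plan is consistent with at least one leaf of the CT, and the termination check at line~\ref{ln:isKempty} together with soundness above ensures that if the CT is ever exhausted at conflict-free leaves we output a solution.

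The main obstacle, and the step to treat carefully, is the fallback through \textsc{shouldMerge} at line~\ref{ln:shouldMerge}. Here I would appeal to the K-CBS merge-and-restart argument: whenever the number of conflicts on any pair exceeds $B$, those robots are merged into a composed belief agent and the algorithm restarts. Because there are only finitely many robots, this recursion bottoms out, in the worst case, at the fully centralized instance whose composed meta-belief $\textbf{B}_k$ and block-diagonal $\Gamma_k^M$ are precisely those described in Sec.~III.A. On that instance Belief-$\mathcal{A}$ is the sole planner and, by assumption, is probabilistically complete under the chance constraints; combined with completeness of \textsc{validatePlan}, this yields the desired almost-sure guarantee. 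Chaining the three cases (conflict-free leaf, decoupled-but-feasible with bounded conflicts, merged centralized planning) and taking the limit over K-CBS iterations completes the proof.
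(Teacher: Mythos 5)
Your proposal is correct and follows essentially the same route as the paper: the paper's proof is a brief appeal to Theorem~1 of the K-CBS paper (inheritance of probabilistic completeness from the low-level planner) plus the observation that a complete \textsc{validatePlan} never rejects valid plans and hence \textsc{createCSTR} never issues spurious constraints. You simply unpack in greater detail the ingredients (soundness at termination, the CT leaf invariant, the merge-and-restart fallback to the centralized meta-belief instance) that the paper leaves implicit by citation.
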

\begin{proof}
The proof builds on Theorem 1 in \cite{Kotting2022_KCBS}, which shows that 
K-CBS inherits the probabilistic completeness property of the low-level planner. Thus, probabilistic completeness of Belief-$\mathcal{A}$ is required for CC-K-CBS. The completeness of \textsc{validatePlan} ensures that CC-K-CBS does not reject valid plans. It follows that \textsc{createCSTR} does not pass spurious constraints to the low-level planner. Therefore, under these conditions, the probabilistic completeness proof for K-CBS extends to CC-K-CBS.
\end{proof}

The completeness condition on \textsc{validatePlan} 
can be satisfied
by exact evaluation of the probabilities in \eqref{eq:prob goal}-\eqref{eq:prob robot-robot collision}. However, this operation can be computationally expensive. To achieve tractability and scalability, we introduce conservatism into \textsc{validatePlan} by over-approximating the probability of collision.
With this \textsc{validatePlan}, we can still guarantee soundness of the CC-K-CBS solutions, but completeness is affected since valid plans may be rejected.

\begin{corollary}
If \textsc{validatePlan} uses an over-approximation of the probability of collision, then CC-K-CBS is probabilistically complete up to the accuracy of the over-approximation.
\end{corollary}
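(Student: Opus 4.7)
The plan is to reduce the corollary to Theorem~\ref{thm:completeness} by reinterpreting the over-approximating \textsc{validatePlan} as an \emph{exact} validator for a suitably tightened chance-constrained problem. First I would fix a uniform upper bound $\varepsilon \geq 0$ on the gap between the reported collision probability and the true probability $P^{i_k}_O + \sum_{j \neq i} P^{ij_k}_{\coll}$, taken over all Gaussian beliefs that arise during planning. Because the reported probability is never smaller than the true one, the constraint \eqref{eq:coll cc} as evaluated by \textsc{validatePlan} is equivalent to the exact collision chance constraint with $p_{\safe}$ replaced by $p_{\safe} + \varepsilon$; the goal constraint \eqref{eq: goal cc}, which involves no collision probability, is unaffected.

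Next I would instantiate the original CC-MRMP specification with this tightened safety threshold and invoke Theorem~\ref{thm:completeness} directly. On the tightened problem, the over-approximating \textsc{validatePlan} is, by construction, \emph{complete}: it accepts a plan if and only if that plan satisfies the tightened constraint. The low-level planner Belief-$\mathcal{A}$ is probabilistically complete by hypothesis, and its completeness is not affected by the tightening because the tightened problem is just another CC-MRMP instance of the same form. Thus Theorem~\ref{thm:completeness} applies and CC-K-CBS almost-surely returns a plan satisfying the tightened constraints whenever such a plan exists. Every such plan is \emph{a fortiori} feasible for the original problem, giving soundness in this regime.

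The final step is to translate this into the informal completeness statement of the corollary. The set of feasible plans of the tightened problem is contained in that of the original problem, so CC-K-CBS is probabilistically complete precisely on those problem instances that admit a plan with an $\varepsilon$-safety margin with respect to the exact chance constraint. This is the precise content of ``probabilistically complete up to the accuracy of the over-approximation,'' with the accuracy identified as $\varepsilon$.

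The main obstacle I anticipate is the existence of a single, uniform $\varepsilon$. In general, the tightness of the over-approximation depends on robot geometries, relative beliefs, and the time horizon, so a worst-case constant may be loose or even unbounded for degenerate beliefs. A cleaner statement would replace the scalar $\varepsilon$ by a per-check bound $\varepsilon(b(x_k^i), b(x_k^j))$ and phrase the completeness guarantee instance-by-instance; for the purposes of this paper, however, treating $\varepsilon$ as an upper bound on the per-check approximation error incurred along any candidate plan suffices, and the reduction to Theorem~\ref{thm:completeness} goes through verbatim.
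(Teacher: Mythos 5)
Your argument is sound and reaches the right conclusion, but it is considerably more formal than what the paper actually does: the paper offers no proof of this corollary at all, only the one-sentence observation preceding it that an over-approximating \textsc{validatePlan} preserves soundness (no invalid plan is ever accepted) while completeness degrades because some valid plans are spuriously rejected. Your reduction --- reinterpreting the conservative validator as an exact validator for a tightened instance with threshold $p_{\safe}+\varepsilon$ and then invoking Theorem~\ref{thm:completeness} --- is a legitimate way to make ``up to the accuracy of the over-approximation'' precise, and it buys an actual quantitative statement where the paper has only an informal one. Two caveats. First, your claimed \emph{equivalence} between the validator's acceptance condition and the tightened exact constraint is really only a sandwich: the acceptance region contains the feasible set of the tightened problem and is contained in the feasible set of the original one, so CC-K-CBS is complete \emph{at least} on instances admitting an $\varepsilon$-margin plan, not ``precisely'' on them; only the containment direction is needed, so the conclusion survives, but the wording overreaches. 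Second, your worry about a uniform $\varepsilon$ is well founded and arguably understated: for Method~1 the check is a binary sphere-intersection test rather than a probability estimate, so the per-check gap can approach $1$ and a uniform bound is essentially vacuous there; your per-check $\varepsilon(b(x_k^i),b(x_k^j))$ reformulation is the honest version of the corollary, and is closer to what the paper's vague phrasing can actually support.
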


Below, we introduce several methods for over-approximating the probability of collision. 

\section{Robot-Robot Collision Checking Algorithms}
    \label{sec:collisionChecking}

Recall that the probability of collision between robots $i$ and $j$, as defined in  \eqref{eq:prob robot-robot collision}, requires integration of the joint probability distribution of the two robots over set $\X^{ij}_\coll$. 
This computation is difficult because (i) $\X^{ij}_\coll$ is hard to construct, and (ii) integration of the joint probability distribution function is expensive.  However, the constraint-checking procedure must be extremely fast in sampling-based algorithms because it is called in every iteration of tree extension. In this section, we introduce three approximation methods for this integration that trade off accuracy with computation effort.  

\begin{figure}[b]
    \centering
    \begin{subfigure}{0.26\textwidth}
    \includegraphics[width=\textwidth]{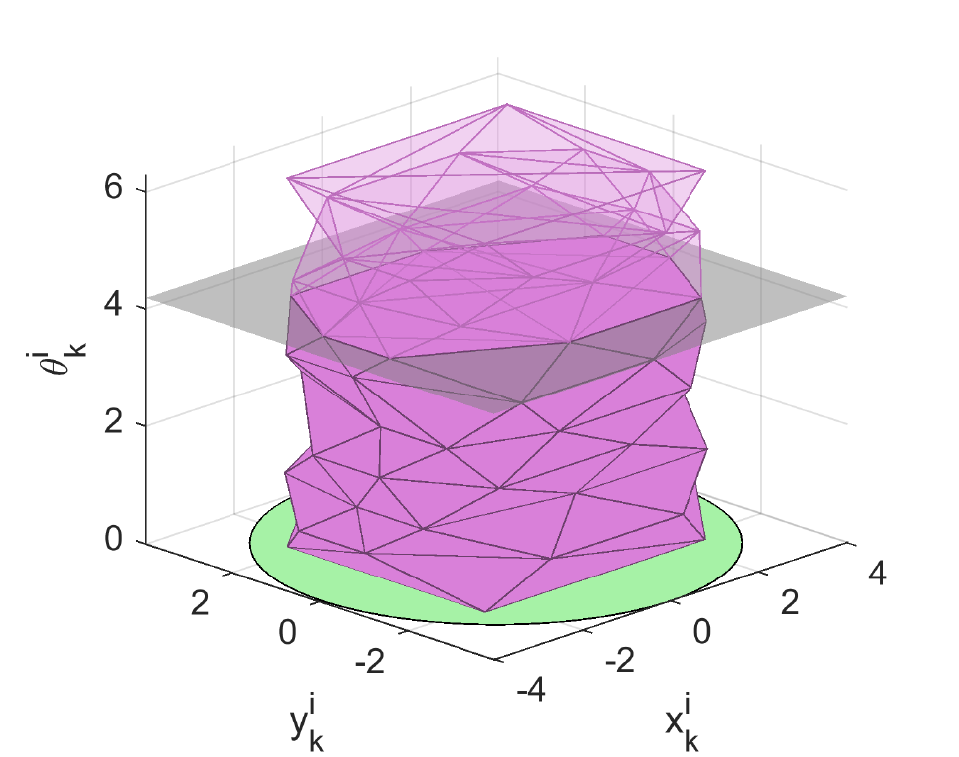}
    \vspace{-7mm}
    \caption{}
    \label{fig:CspaceIntegration}
    \end{subfigure}
    \centering
    \begin{subfigure}{0.2\textwidth}
    \includegraphics[width=\textwidth]{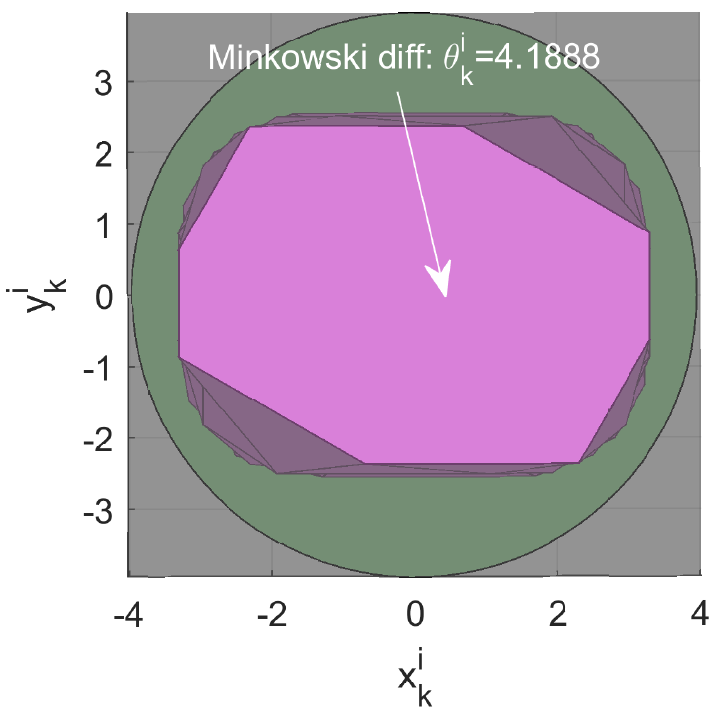}
    \vspace{-7mm}
    \caption{}
    \label{fig:CspaceIntegration_slice}
    \end{subfigure}
\caption{(a) 3D view of the c-space obstacle $\X_{coll}^{ij}$ 
for
robot $i$ with fixed robot $j$ calculated at discrete choices of $\theta_k^i$.  
(b)~Top-down view of $\X_{coll}^{ij}$.
The projection of $\X_{coll}^{ij}$ onto x-y plane is bounded by a green circle.
}
\end{figure}

For clarity of explanation, consider two deterministic (rotating and translating) robots $i$ and $j$ in a 2D workspace, with configurations $(\text{x}_k^l,\text{y}_k^l,\theta_k^l)$, $l\in \{i,j\}$. We can construct the 4D c-space obstacles for this pair by taking the Minkowski difference of the two agents for every pair of orientation angles (see the c-space construction in Fig.~\ref{fig:CspaceIntegration}, which shows the c-space in 3D for a given orientation of agent $j$). This 4D obstacle is generally difficult to construct, so deterministic collision checkers typically resort to collision checking in the workspace, e.g., by bounding the projection of this set, shown by a green disk in Fig. \ref{fig:CspaceIntegration_slice}.

Now, consider that each robot is described by a random variable (RV), so the position and orientation of their bodies are no longer deterministic but given by beliefs, such that each point in the 4D c-space maps to a probability. The exact probability of collision is the integral of the probability mass over the c-space obstacle. 
This results in the two challenges listed above: c-space obstacle construction and integral computation.
Each method 
proposed below attempts to address these core problems. 
\begin{figure}
     \centering
     \begin{subfigure}[b]{0.45\columnwidth}
         \centering
         \includegraphics[width=\textwidth]{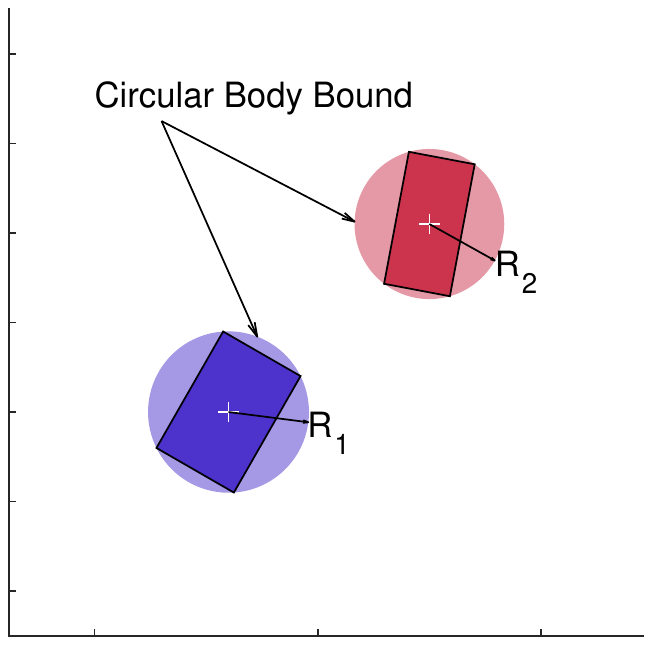}
         \vspace{-5mm}
         \caption{}
         \label{fig:diskBoundBody}
     \end{subfigure}
     \hfill
     \begin{subfigure}[b]{0.45\columnwidth}
         \centering
         \includegraphics[width=\textwidth]{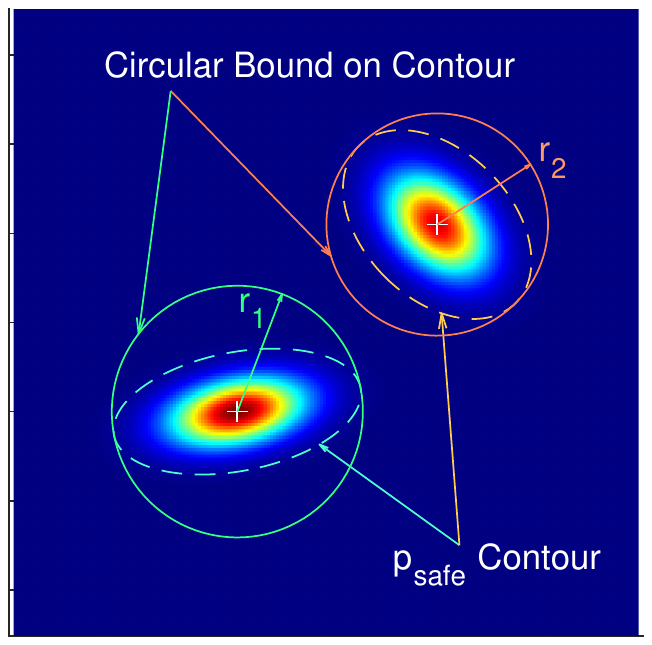}
         \vspace{-5mm}
         \caption{}
         \label{fig:AgentCircPsafe}
     \end{subfigure}
     \newline
     \begin{subfigure}[b]{0.45\columnwidth}
         \centering
         \includegraphics[width=\textwidth]{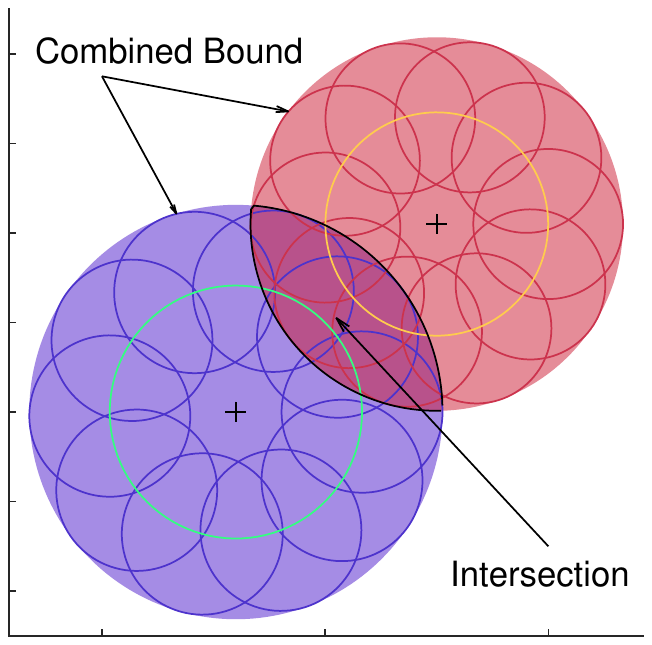}
         \vspace{-5mm}
         \caption{}
         \label{fig:diskIntersect}
     \end{subfigure}
     \hfill
     \begin{subfigure}[b]{0.45\columnwidth}
         \centering
         \includegraphics[width=\textwidth]{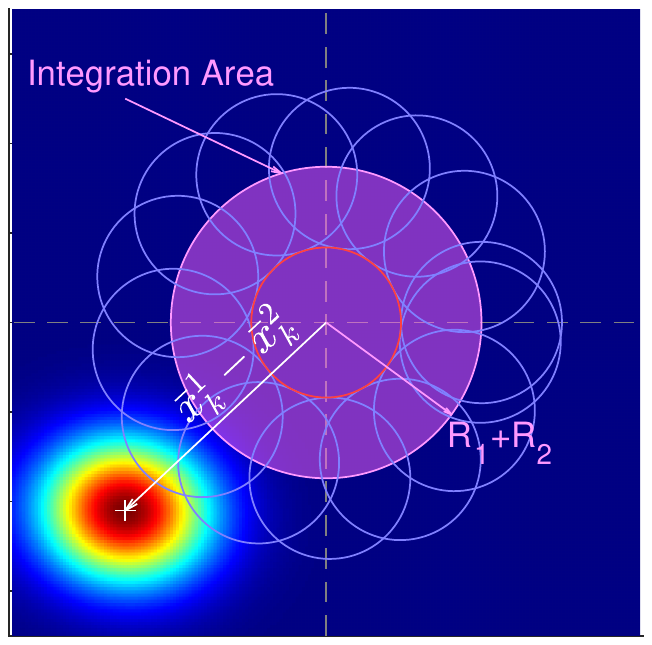}
         \vspace{-5mm}
         \caption{}
         \label{fig:CircularBodyCicularIntegration}
     \end{subfigure}
     \newline
     \begin{subfigure}[b]{0.45\columnwidth}
         \centering
         \includegraphics[width=\textwidth]{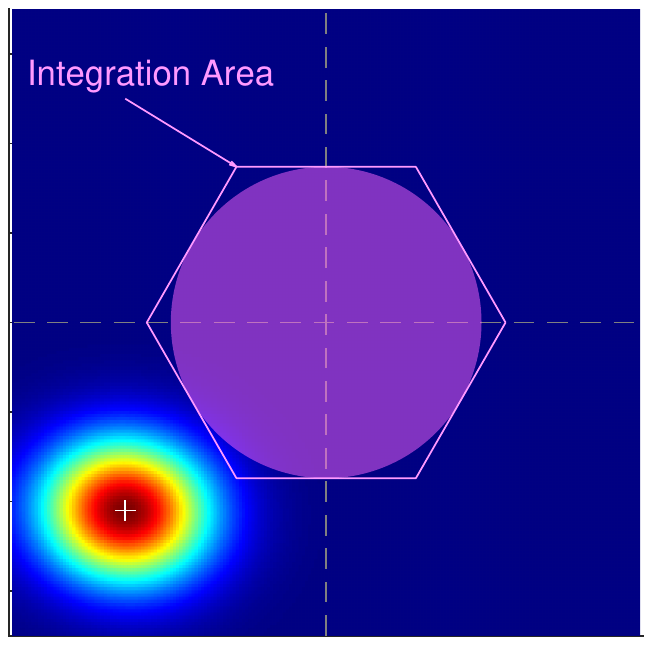}
         \vspace{-5mm}
         \caption{}
         \label{fig:CircularBodyPolygonIntegration}
     \end{subfigure}
     \hfill
     \begin{subfigure}[b]{0.45\columnwidth}
         \centering
         \includegraphics[width=\textwidth]{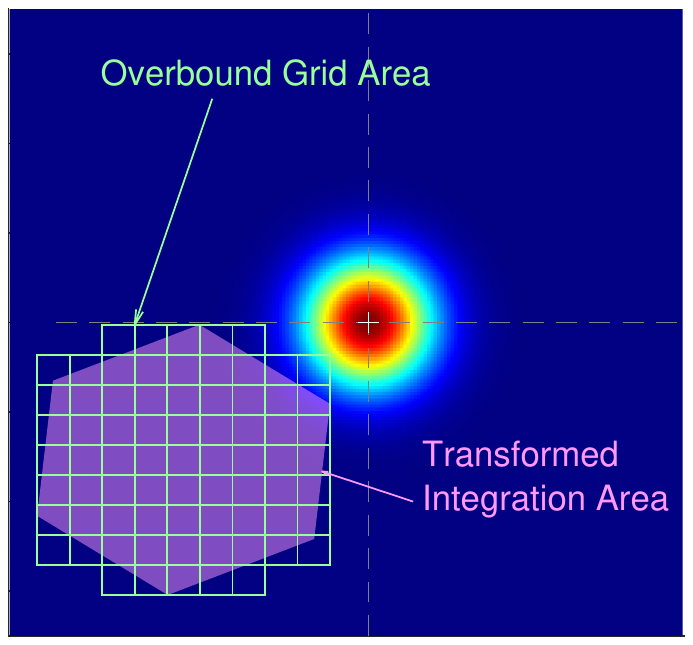}
         \vspace{-5mm}
         \caption{}
         \label{fig:CircularBodyGridIntegration}
     \end{subfigure}
     \caption{
(a) Circle bound on robots' bodies. (b) $p_\safe$ contour and corresponding circular over bound on $p_\safe$. (c) Inflated circular bounds intersect, indicating possible collision between robots. (d) Integration area for probability of collision with circular disk bound on robots' bodies. (e) Polygonal bound on circular integration area. (f) Transformed distribution and integration area, with over-approximated grid.}
\end{figure}

\subsection{Method 1 (M.1): Safety Contour}
\label{sec:Method1}
M.1 is inspired by deterministic collision checking methods, where collisions are detected by intersections of robot bodies in the workspace. We adapt this framework to uncertain collision checking by inflating bounds on the robot bodies to encapsulate $p_\safe$ probability mass.


Recall that the expected belief is $\expBelief(x_k^i)= \N(\check{x}_k^i,\Gamma_k^i)$. We define the Gaussian marginal of this distribution in the workspace as $\expBelief_\W(x_k^i)$.
Let $\bar{x}^i_k \in \W$ be the position components of $x_k^i$, and $\bar{\Gamma} \in \reals^{w\times w}$ be the corresponding covariance matrix (sub-matrix of $\Gamma^i_k$). Then $\expBelief_\W(x_k^i) = \N(\bar{x}^i_k, \bar{\Gamma}^i_k)$.
The elliptical probability contour containing $p_{\safe}$ probability mass in the workspace marginal distribution is the region $\mathcal{E}_{\safe}^i$ such that
    $\int_{\mathcal{E}_{\safe}^i} \expBelief_\W(x_k^i) dx=p_{\safe}.$
The axes of this ellipsoid are given by $a_l^i=t_{\chi}\lambda_l, \quad l=1,\ldots,w,$ 
where $\lambda_l$ is an eigenvalue of $\bar{\Gamma}_k^i$, and
$t_{\chi}$ is the inverse $\chi^2$ cumulative density function
evaluated at $p_{\safe}$ with $w$ degrees of freedom.  

Determining intersection of ellipses is difficult, so we bound the ellipsoid with a sphere, where checking for intersection only requires comparing the sum of the radii to the distance between the centers. The sphere containing this ellipsoid, $\mathcal{C}_{\safe}^i$, is defined by the radius $r^i=t_{\chi}\lambda_{max}$, where 
$\lambda_{\max}=\max\{ \lambda_l \}_{l=1}^w.$
Because $\mathcal{E}_{\safe}^i\subseteq \mathcal{C}_{\safe}^i$, it is assured that the probability mass contained by this sphere is greater than or equal to $p_{\safe}$, i.e., 
    $\int_{\mathcal{C}_{\safe}^i} \expBelief_\W(x_k^i) dx\geq p_{\safe}.$
An illustration of both the elliptical contour and its associated spherical bound are shown in Fig.~\ref{fig:AgentCircPsafe} for the robots in Fig~\ref{fig:diskBoundBody}.

Note that this sphere contains only the probability mass associated with $x_k^i$, the origin point on the body frame. 
It does not account for the collision probability of the other points on the robot body.
To address this, we inflate the sphere by a bounding sphere on the robot's body (shown in Fig.~\ref{fig:diskBoundBody}), such that the probability of \emph{any} point on the robot's body being within this inflated sphere is greater than or equal to $p_{\safe}$. The radius, $R_i$, of the bounding sphere on the robot's body is obtained by finding the largest possible Euclidean distance between any pair of points within a robot's body: $R_i = \frac{1}{2}\max_{x_1,x_2\in \B^i} \| x_1-x_2 \|_2$. The inflated bound, $\mathcal{C}_{bound}^i$ is a sphere of radius $r^i+R^i$. The probability that any point of the robot is outside this sphere, the complement of $\mathcal{C}_{bound}^i$, is guaranteed to be less than $1-p_{\safe}$. Thus, if any two robot's bounding spheres
do not intersect, i.e. $\mathcal{C}_{bound}^i \cap \mathcal{C}_{bound}^j = \emptyset$, the probability of collision between them is guaranteed to be less than $1-p_{\safe}$.
The inflated disks and corresponding intersection checking are shown in Fig.~\ref{fig:diskIntersect}.

This method has two main sources of conservatism. The first stems from the bounding sphere on the robot body. This is especially egregious for robots with high aspect ratios. The second source is associated with checking for intersections of the probability mass bound. This conservatively assumes all probability mass not within the sphere is in collision, so any intersection (no matter how small) is considered a possible constraint violation. Violation for this method does not imply that the actual probability of collision exceeds the constraint. Despite these limitations, this method is extremely fast. 

\subsection{Method 2: Linear Gaussian Transformation}
\label{sec:Method2}
This section introduces a class of methods that reduce conservatism by simplifying the integration itself, namely by reducing its dimensionality. This is done by introducing a new RV representing the difference between two robots' states: $x_k^{ij}=x_k^i-x_k^j$. This RV is Gaussian distributed such that $x_k^{ij}\sim\N(\check{x}_k^i-\check{x}_k^j, \Gamma_k^i+\Gamma_k^j)$, with the workspace marginal $\expBelief_\W(x_k^{ij})$. This new RV represents a vector connecting the origin points of two robots. Collisions correspond to specific realizations of $x_k^{ij}$, 
defining the required integration  area.

\subsubsection{Method 2.1 (M.2.1): Convex Polytopic Bounding}
\label{sec:Method2.1}
The difference RV somewhat simplifies the high dimensional c-space obstacle representation, shown in Fig.~\ref{fig:CspaceIntegration}, by merging the workspace dimensions for the two robots. But the orientation of the robots must still be considered. To obtain a tractable integration area, we use the same bounding spheres described in Sec.~\ref{sec:Method1}. Intersection of the bounding spheres is considered a collision state, corresponding to a spherical integration area $S_{body}^{ij}$, with radius $R^i + R^j$, over difference distribution $\expBelief_\W(x_k^{ij})$ (shown in Fig.~\ref{fig:CircularBodyCicularIntegration}). 

Because exact integration of a Gaussian distribution over a sphere is difficult, we bound $S_{body}^{ij}$ with a polytope, $\mathcal{P}_{body}^{ij}$, which can be defined by a set of $N_s$ half planes, $\mathcal{P}_{body}^{ij} = \{ x \mid c_{h,i}^Tx<d_{h,i}, \forall h\in[1,N_s]\}$, where $c_{h,i}\in\mathbb{R}^{n\times 1}$ and $d_{h,i}\in\mathbb{R}$. This reduces the robot-robot chance constraint checking problem to the same problem addressed in \cite{Luders2010_CC-RRT},  where various (over-)approximations are used to reduce the probabilistic chance constraint checking to deterministic linear constraints checking. We employ the same methods to check the chance constraints, with the distribution given by $\N(\check{x}_k^i-\check{x}_k^j, \Gamma_k^i+\Gamma_k^j)$, and polytope given by $\mathcal{P}_{body}^{ij}$.

A key approximation in this method is the allocation of the acceptable probability of collision, or risk allocation: the total acceptable probability of collision $p_{coll} = 1-p_{\safe}$ must be distributed across all the possible sources of collision. In our case, we must distribute the probability of collision for a robot among the $N_A-1$ other robots. We denote the allowable probability for robot $i$ colliding with any other robot $j$ as $p_c^{ij}$. Two proposed methods of determining $p_c^{ij}$ are described in Sec.~\ref{sec:Risk Allocation}. 

This method has two main sources of conservatism. The first occurs with the spherical bound on the body, the same bound used in M.1 (Sec.~\ref{sec:Method1}) while the second arises from the use of deterministic linear constraints for the integration. This source of conservatism scales with the number of robots and the number of half planes in the polytope bound. This method is only slightly more computationally expensive than the first method: while checking linear constraints is very computationally simple, this method must at worst check every constraint associated with each pair of robots. This yields slightly longer computation times than the M.1, which requires checking a single condition for every robot pair.

\subsubsection{Method 2.2 (M.2.2): Grid Integration}
This method follows directly from M.2.1, but with a direct integration technique, rather than a check on deterministic linear constraints. This makes this method more accurate and less conservative, but also more computationally intensive. 

We begin with the same difference distribution over $\expBelief_\W(x_k^{ij})$ 
and integration over $\mathcal{P}_{body}^{ij}$. We then perform a whitening (Mahalanobis) transformation on the distribution and $\mathcal{P}_{body}^{ij}$. This transformation decorrelates a Gaussian distribution, yielding the standard normal distribution, as well as rotating and translating $\mathcal{P}_{body}^{ij}$. We define this new integration area as $\mathcal{P}_{white}^{ij}$ (see Fig.~\ref{fig:CircularBodyGridIntegration}). 
Because we have transformed the probability distribution to a standard normal, the cumulative distribution function can be easily computed via the error function (erf); we do this over a grid. This technique is agnostic to the choice of grid cells and only requires rectangular cells that completely cover the polytope $\mathcal{P}_{white}^{ij}$, as shown in green in Fig. \ref{fig:CircularBodyGridIntegration}. The probability of collision is the sum of the probability mass contained in each grid cell, defined as $p_{poly}$. The chance constraint can then be simply checked as $p_{poly}\leq p_c^{ij}$,
where $p_c^{ij}$ again is the risk allocated to a pair of robots.




This method inherits the same conservatism associated with the spherical body bound and polytope integration area bound in M.2.1. However, the approximate grid calculation is far less conservative than the deterministic linear constraints. The grid introduces more computational cost, particularly in transforming the distribution and evaluating the probability of each grid cell. Finer grid discretization reduces conservatism, but also increases computation time.

\subsection{Risk Allocation}
\label{sec:Risk Allocation}
Risk allocation is only necessary for M.2.1 and M.2.2 and not required in M.1. Equally distributing the allowable probability of collision $p_{coll}$ across the robots and obstacles, as introduced in \cite{Luders2010_CC-RRT}, is the simplest allocation method. For robot-robot collision this corresponds to $p_c^{ij}=\frac{p_{coll}}{N_O + N_A - 1}$. However, this can be overly conservative and make it difficult to find viable motion plans. Instead, we propose allocating $p_{coll}$ more effectively, such that robots with a higher likelihood of collision are assigned a higher proportion of $p_{coll}$.

We begin by setting $p_{coll}=P_c^A+P_c^O$, where $P_c^A$ 
and
$P_c^O$ are the allowable probabilities of collision with all the robots and all the static obstacles, respectively. This naturally fits the high-level low-level division in CC-K-CBS because obstacle collision checking occurs only in the low-level planner.
Define the total volume of all the robots as $V_A$, and the volume of all the obstacles as $V_O$. Then, we set  $P_c^A=\frac{V_A p_{coll}}{V_A+V_O}$ and $P_c^O=\frac{V_Op_{coll}}{V_A+V_O}$. This allocates more of the probability of collision to the category that is more likely to cause collisions. 
We then divide $P_c^A$ based on the distance between robots, with robots that are closer together receiving a larger portion of $P_c^A$. Let $d_k^{j}=\|\bar{x}_k^i-\bar{x}_k^j\|$ be the workspace distance between robot $i$'s and $j$'s means. Then, the allowable probability of collision for robot $i$ with $j$ is $p_c^{ij} = \frac{\alpha}{d_k^j}P_c^A$, where 
$\alpha =1/ (\sum_j^{N_A-1} d_k^{j})$ 
is the normalizing factor.

\section{EVALUATIONS}
    \label{sec:eval}
    
We evaluate our algorithms in several benchmarks. We first independently test the robot-robot collision checking algorithms in Sec.~\ref{sec:collisionChecking}, allowing us to isolate the conservatism and computational complexity of the robot-robot collision checkers. We then analyze the full algorithms from Sec.~\ref{sec:method} with different collision checking methods.
Our implementation of CC-K-CBS is publicly available~\cite{kcbs-code}. It uses belief-SST as the low-level planner in Open Motion Planning Library (OMPL) \cite{sucan2012the-open-motion-planning-library}. 
The benchmarks were performed on AMD 4.5 GHz CPU and 64 GB of RAM. 

\begin{figure}
    \centering
    \begin{subfigure}{0.23\textwidth}
    \includegraphics[width=\textwidth]{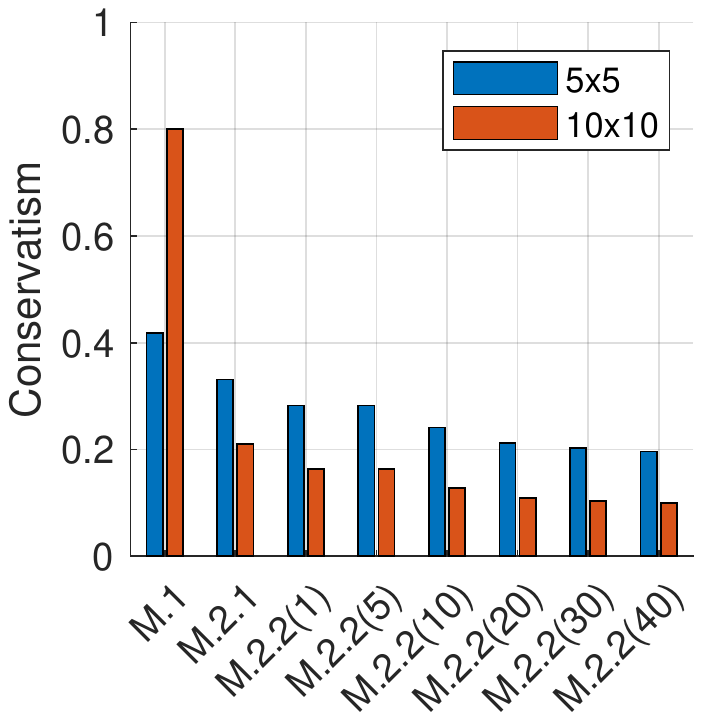}
    \vspace{-5mm}
    \caption{}
    \label{fig:conservatism}
    \end{subfigure}
    \hfill
    \begin{subfigure}{0.23\textwidth}
    \includegraphics[width=\textwidth]{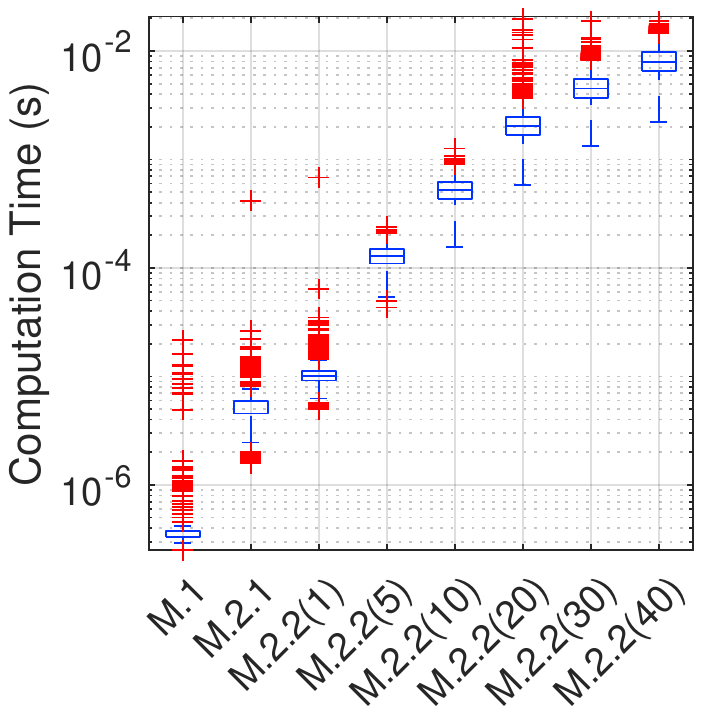}
    \vspace{-5mm}
    \caption{}
    \label{fig:compTime_combined}
    \end{subfigure}
    \hfill

\caption{(a) Plot of the conservatism $C_{z}$ for each method in each environment. (b) Box and whiskers plots of the log(computation time) for each method.}
\end{figure}

We consider square-shaped robots of width 0.25 with two types of dynamics: 2D robotic system taken from~\cite{Bry2011_BeliefProp}; $x_{k+1}^i=x_k^i+u_k^i+w_k^i, w_k^i \sim \mathcal{N}(0,0.1^2I)$, and 2nd-order unicycle; $\dot{\text{x}}^i= \text{v} \cos\theta, \dot{\text{y}}^i= \text{v} \sin \theta, \dot{\theta}=\omega, \dot{\text{v}}=a$, where $\omega$ and $a$ are control inputs. We use the feedback linearization 
scheme described in \cite{DeLuca2000_feedbackLin} 
to obtain a linear model, 
which we convert to discrete time with additive noise. We assume both systems are fully observable with measurement model in \eqref{eq: measurement model} with $C^i=I$ and $R^i=0.1^2I$.

\textbf{Robot-Robot Collision Checker Benchmark: }
We characterize the robot-robot collision checking algorithms based on computation time and conservatism. Conservatism impacts the planners ability to find paths in more difficult, i.e. cluttered, situations. For a given set of beliefs for two robots, we define the conservatism of method $z$ as $C_{z} = R_{z} - R_{MC}$, where $R_{z}$ is the rejection rate obtained by method $z$, and $R_{MC}$ is the rejection rate obtained by a Monte Carlo evaluation of the probability of collision.

We sampled 2D beliefs in two difference spaces, 5$\times$5 and 10$\times$10, with $p_\safe=0.95$. For M.2.2 (the gridded cdf method), we chose 6 different grid discretizations, denoted by M.2.2($d$). For each discretization, the maximum range of $\mathcal{P}_{whit}^{ij}$ over each axis, $D_w$, was divided equally, such that the discretization step size was $D_w/d$. Note that each of the proposed methods decreases in conservatism, shown in Fig.~\ref{fig:conservatism}, and increases in computation time, shown in Fig.~\ref{fig:compTime_combined}. The 5$\times$5 space is the more `cluttered', with more beliefs in close proximity, which in turn results in higher collision probabilities. Note that the 5$\times$5 space generally results in lower conservatism. This is because for the relatively high choice of $p_\safe$, a high proportion of the sampled beliefs truly violated the chance constraint, making it less conservative compared to a scenario where more beliefs are valid.

\begin{figure}[t]
    \centering
    \begin{subfigure}[b]{0.49\columnwidth}
         \centering
         \includegraphics[width=\textwidth]{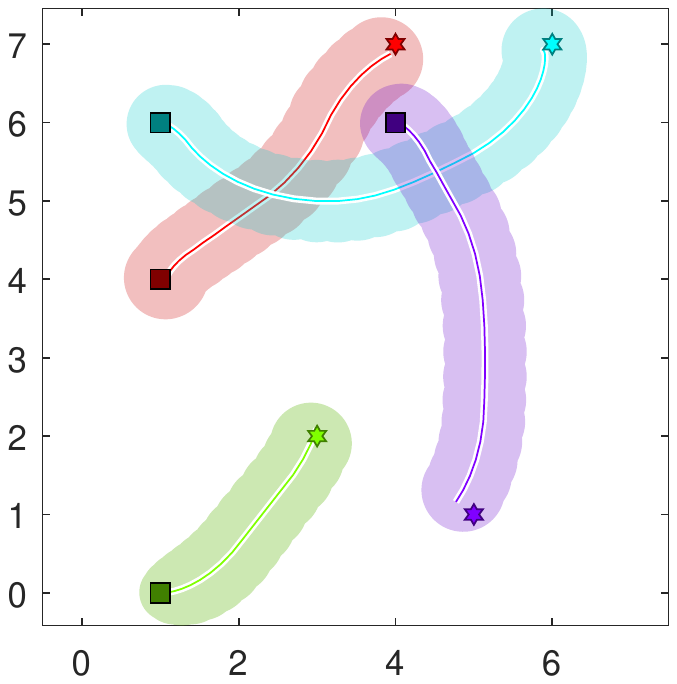}
         \caption{Env8}
         \label{fig:unicycle_SampleTraj_8x8}
     \end{subfigure}
    \begin{subfigure}[b]{0.485\columnwidth}
         \centering
         \includegraphics[width=\textwidth]{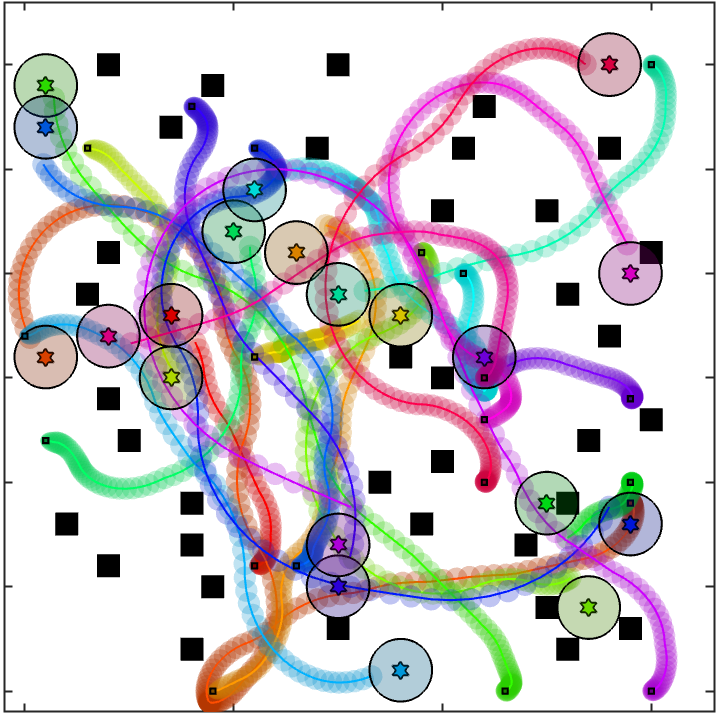}
         \caption{Env32Obs}
         \label{fig:unicycle_SampleTraj_32x32obs}
     \end{subfigure}
     \hfill
     \hspace{-2mm}
     \caption{(a) Sample plan for 4 robots with 2nd-order unicycle dynamics in Env8. Ellipses are $95\%$ probability contours. (b) Sample plan for 20 robots with 2nd-order unicycle dynamics in Env32Obs. Ellipses are $95\%$ probability contours.}
     \label{fig:envs}
\end{figure}



\begin{table}[t]
\caption{CC-K-CBS benchmarks for Env8.}
    \centering
    \scalebox{0.82}{
    \begin{tabular}{|c|c|c|c|c|c|c|c|}
         \hline
         \multicolumn{8}{|c|}{2D Linear} \\ \hline
        $N_A$ & Metric & M.1 & M.2.1 & M.2.1* & M.2.2(2) & M.2.2(5) & M.2.2(10) \\ \hline 
        \multirow{3}{*}{2} &  Succ. Rate &  1.00&   1.00 &   1.00 &   1.00 &   1.00 &   1.00\\ 
         &  Time (s) &    1.08 &   \textbf{0.68} &   1.04 &   1.11 &   0.69 &   0.74 \\ \hline 
        \multirow{3}{*}{4} &  Succ. Rate &  0.00&   1.00 &   1.00 &   1.00 &   1.00 &   1.00\\ 
         &  Time (s) &     - &  18.68 &  35.03 &   4.19 &   \textbf{3.97} &   7.61 \\ \hline 
        \multirow{3}{*}{6} &  Succ. Rate &  -&   0.00 &   0.12 &   1.00 &   1.00 &   1.00\\ 
         &  Time (s) &     - &    - &  66.68 &   \textbf{4.44} &   6.15 &  12.72 \\ \hline 
        \multirow{3}{*}{8} &  Succ. Rate &  -&   - &   0.00 &   1.00 &   1.00 &   0.86\\ 
         &  Time (s) &     - &    - &    - &  \textbf{15.38} &  26.54 &  59.08 \\ \hline  
         \multicolumn{8}{|c|}{Unicycle} \\ \hline
        \multirow{3}{*}{2} &  Succ. Rate &  1.00&   1.00 &   1.00 &   1.00 &   1.00 &   1.00\\ 
         &  Time (s) &    0.95 &   0.63 &   0.61 &   \textbf{0.37} &   0.57 &   0.83 \\ \hline 
        \multirow{3}{*}{4} &  Succ. Rate &  0.00&   0.96 &   1.00 &   1.00 &   1.00 &   1.00\\ 
         &  Time (s) &     - &  17.67 &   9.81 &   \textbf{2.59} &   2.86 &   5.24 \\ \hline 
        \multirow{3}{*}{6} &  Succ. Rate & -&   0.00 &   0.00 &   1.00 &   1.00 &   1.00\\ 
         &  Time (s) &     - &    - &    - &   \textbf{4.79} &   5.95 &  10.40 \\ \hline 
        \multirow{3}{*}{8} &  Succ. Rate &  -&   - &   - &   1.00 &   0.98 &   0.96\\ 
         &  Time (s) &     - &    - &    - &  \textbf{18.27} &  25.52 &  45.73 \\ \hline 
    \end{tabular}
    }
    \label{tab:benchCCKCBS}
\end{table}

\textbf{Planners Benchmark: }
We use benchmarking to evaluate the proposed planners under each of the collision-checking methods in Sec.~\ref{sec:collisionChecking}. We additionally implement the adaptive risk allocation on M.2.1, denoted by M.2.1*. All other methods use equal allocation. 
Benchmarking was performed using 50 runs for each algorithm with a maximum planning time of 3 minutes and $P_\safe = 0.9$. We examine 3 environments: a small 8$\times$8 (Env8), large 32$\times$32 with 50 random obstacles (Env32Obs), and large 32$\times$32 without obstacles (Env32) shown in Figs.~\ref{fig:unicycle_SampleTraj_8x8}, \ref{fig:unicycle_SampleTraj_32x32obs}, and \ref{fig:unicycle_SampleTraj_big}, respectively.
The considered metrics are \emph{success rate} (the ratio of the successful planning instances within 3 minutes to all instances) and \emph{time} (computation time of the successful runs).

\textit{Env8: } This small environment allows us to study how the different collision checking methods perform in a cluttered environment with the 2D system. The results are shown in Table~\ref{tab:benchCCKCBS}. Conservatism strongly impacts the planner's ability to find paths with additional robots. All methods reliably find plans for 2 robots, however as the conservatism of the method increases, the success rate decreases when more agents are added. In this cluttered environment robot paths are necessarily close together, so more conservative methods are less likely to find valid paths.  Specifically, M.1 performs poorly and M.2.2 has the best performance.


\textit{Env32Obs: } The large environment is far less cluttered, allowing for far more agents and better insight into the scalability of the CC-K-CBS algorithm. The results for Env8 are based on the robot-obstacle collision checker from \cite{Luders2010_CC-RRT}. However, this is very conservative, especially for many obstacles.  CC-K-CBS with this method was unable to find paths in Env32Obs for even 2 robots. To scale to a larger number of robots, we made the very straightforward adaptation to M.1 to check intersection of the bound $\mathcal{C}_{bound}^i$ with obstacles. 
As shown in Fig~\ref{fig:benchmark 30robots}, CC-K-CBS is able to scale to $20$ robots with $68\%$ success rate and runtime of $120.7\pm5.45$ seconds for the 2nd-order unicycle dynamics. A sample plan is shown in Fig.~\ref{fig:unicycle_SampleTraj_32x32obs}.  This illustrates that for large number of robots, M.1 has the best performance because it is not affected by risk allocation, whereas the allocated risk gets prohibitively smaller as the number of robots increases for the other methods, making it difficult to find valid plans.

We also performed benchmarking on the centralized approach with the simpler 2D system in Env32Obs, using collision checking M.1. 
This algorithm was able to plan for 2 robots with $100\%$ success rate, but the success rate dropped to $8\%$ for 3 robots, illustrating the classical scalability issue with centralized approaches. 

\textit{Env32: } To further test the scalability of CC-K-CBS, we increased the number of agents in the large open environment Env32. 
As shown in Fig~\ref{fig:benchmark 30robots}, CC-K-CBS is able to to scale to $28$ 2nd-order unicycle robots with $56\%$ success rate and $129\pm4.97$ seconds computation time. 
A sample plan is shown in Fig.~\ref{fig:unicycle_SampleTraj_big}.  CC-K-CBS is also able to plan for 30 robots but at much lower success rate of $18\%$.

\begin{figure}
    \centering
    \includegraphics[width=0.86\columnwidth]{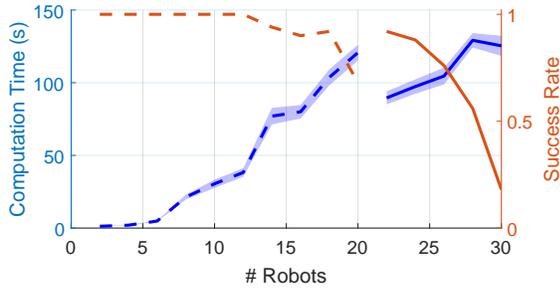}
    \caption{Benchmark results for CC-K-CBS on varying number of 2nd-oder unicycle robots. For \# robots $\leq 20$, Env32Obs is used (dashed line), and for \# robots $\geq 22$, Env32 is used (solid line). The discontinuity is due to removing obstacles.
    }
    \label{fig:benchmark 30robots}
\end{figure}

We validated the robustness of the motion plans using Monte Carlo simulation. We randomly selected motion plans, and with 500 simulations for each robot and observed no more than $2\%$ collision probability over the entire trajectory, illustrating the plans are indeed robust to uncertainty.

\section{CONCLUSION}
    \label{sec:conclusion}
    In this work, we consider the MRMP problem under Gaussian uncertainty and introduce CC-K-CBS, which generates motion plans for each robot with chance-constraint guarantees.
This algorithm is based on extending K-CBS to accommodate belief space planning. 
Our main contribution includes three methods for fast evaluation of collision probabilities between robots. 
This offline algorithm successfully generated motion plans across a variety of scenarios and scaled to 30 robots.
Future directions could encompass other types of noise and nonlinear dynamics as well as new collision checking methods that could further 
improve computation time and conservatism.


\bibliographystyle{IEEEtran}
\bibliography{refs}

\end{document}